\documentclass[letterpaper, 10 pt, conference]{ieeeconf}  

\IEEEoverridecommandlockouts                              
\usepackage[margin=0.75in]{geometry}
\usepackage{amsmath,amsfonts,mathtools,bm}
\usepackage{booktabs,tabularx,array,multirow}
\usepackage{cite}

\usepackage{amsthm}

\usepackage{algorithm}
\usepackage{algorithmic}
\usepackage{xcolor}
\usepackage{graphicx}
\usepackage{tikz}
\usetikzlibrary{positioning,arrows.meta,fit,backgrounds,shapes.misc}
\usepackage[hidelinks]{hyperref}
\usepackage[nameinlink,capitalise]{cleveref}
\usepackage{caption}
\usepackage{dblfloatfix}
\usepackage{cuted}

\newtheorem{theorem}{Theorem}

\newtheorem{lemma}{Lemma}
\DeclareMathOperator*{\argmin}{arg\,min}
\DeclareMathOperator{\KL}{KL}

\newcommand{\X}{\mathcal X}
\newcommand{\U}{\mathcal U}
\newcommand{\T}{\mathcal T}
\newcommand{\D}{\mathcal D}
\newcommand{\E}{\mathbb E}
\newcommand{\R}{\mathbb R}
\newcommand{\sg}{\operatorname{sg}}

\newcommand{\norm}[1]{\left\lVert #1 \right\rVert}

\title{\LARGE \bf
Receding-Horizon Control via Drifting Models
}

\author{Daniele Foffano$^\ast$, Alessio Russo$^\ast$ and Alexandre Proutiere
\thanks{$\ast$ Equal contribution}
\thanks{This work was not supported by any organization}
\thanks{D. Foffano and A. Proutiere are with the Division of Decision and Control Systems of the EECS
School at KTH Royal Institute of Technology, Stockholm, Sweden. {\tt\small \{foffano, alepro\}@kth.se}}%
\thanks{A. Russo is at the Faculty of Computing and Data Sciences, Boston University,
        Boston, USA
        {\tt\small arusso2@bu.edu}}%
}

\begin{document}

\maketitle
\thispagestyle{empty}
\pagestyle{empty}

\begin{abstract}
We study the  problem of trajectory optimization in settings where the system dynamics are unknown and it is not possible to simulate trajectories through a surrogate model. When an offline dataset of trajectories is available, an agent could  directly learn a trajectory generator by distribution matching. However, this approach only recovers the behavior distribution in the dataset, and does not in general produce a model that minimizes a desired cost criterion. In this work, we propose \emph{Drifting MPC}, an offline trajectory optimization framework that combines drifting generative models with receding-horizon planning under unknown dynamics. The goal of Drifting MPC is to learn, from an offline dataset of trajectories, a conditional  distribution over trajectories that is both supported by the data and biased toward optimal plans. We show that the resulting  distribution learned by  Drifting MPC is the unique solution of an objective that trades off optimality with closeness to the offline prior. Empirically, we show that Drifting MPC can generate near-optimal trajectories while retaining the one-step inference efficiency of drifting models and substantially reducing generation time relative to diffusion-based baselines.
\end{abstract}

\section{Introduction}\label{sec:introduction}

Trajectory optimization lies at the heart of control and robot autonomy. In many settings, however, computing an accurate control law through trajectory optimization is itself difficult because it requires a model of the environment that is sufficiently faithful for planning. While data-driven modeling \cite{willems2005note,tang2022data,coulson2019data,russo2023Tubebased} and system identification \cite{ljung1998system} provide natural alternatives, in the presence of noise or nonlinearities, it may be difficult to construct a  model of the system that is accurate enough to compute a control law. The system dynamics may be unknown or hard to identify precisely. In such cases, the agent must rely on an offline dataset of previously collected trajectories, often generated by heterogeneous and possibly suboptimal controllers, to learn an optimal action plan.
\\

Reinforcement learning \cite{sutton2018reinforcement} is a natural alternative in this setting. In particular, model-based RL \cite{polydoros2017survey,argenson2021ModelBased} seeks to learn a  model of the system from data and then use that model for planning or policy improvement \cite{janner2019trust,yu2020mopo,kidambi2020morel}. This can be effective when the learned model is sufficiently accurate in the regions visited by the resulting controller. However, the quality of the control law then depends on the quality of the learned model over long horizons, and even small prediction errors may accumulate during rollouts and distort the optimization process \cite{foffano2025Adversariala,janner2019trust,yu2020mopo}. This issue is particularly acute in offline settings, where the model must be learned from a fixed dataset and cannot be corrected through further interactions \cite{kidambi2020morel,prudencio2023survey}.
\\

A different alternative is to bypass explicit model learning altogether and instead learn a conditional generator that proposes full trajectory plans directly from data \cite{NEURIPS2021_099fe6b0,pmlr-v162-janner22a}. In this setting, one does not attempt to identify the underlying transition dynamics and then optimize through it. Rather, one learns a distribution over trajectories themselves, conditioned on the current state. This perspective is attractive for two reasons: (i)  it amortizes computation  so that planning at test time reduces to generating and scoring candidate trajectories; (ii) it shifts the learning problem from system identification to trajectory generation \cite{NEURIPS2021_099fe6b0,chen2021decision,pmlr-v162-janner22a}, which can be preferable in settings where accurate one-step prediction is difficult but the dataset still contains coherent long-horizon behavior. In principle, such a model can be embedded inside a receding-horizon control strategy by repeatedly sampling candidate trajectories from the current state and executing the control action from the one with smallest cost.

Yet, this approach introduces a fundamental mismatch between modeling and optimization. A generator trained only by distribution matching will reproduce the behavior contained in the offline dataset, namely what the data-collecting controllers tended to do, but not necessarily what is optimal according to some cost criterion. The challenge is therefore to retain the computational advantages of direct trajectory generation while biasing the generated plans toward optimal solutions.
\\

Diffusion-based planners address this challenge by combining trajectory generation with reward or cost guidance \cite{pmlr-v162-janner22a}, and have shown that generative models can be effective tools for planning. Their main drawback is computational: they require multiple denoising steps at inference time, which makes them difficult to deploy in real-time control scenarios where the planner must be queried repeatedly. Drifting models provide a complementary alternative \cite{deng2026Generative}. They replace iterative denoising with a one-step pushforward generator trained through an attraction-repulsion field, making them attractive for fast receding-horizon planning. However, if applied directly to offline data, they still converge to the empirical behavior distribution, and therefore do not in general solve the trajectory optimization problem of interest.
\\

In this work, we propose \emph{Drifting MPC}, an offline trajectory optimization framework that combines drifting generative models with receding-horizon planning under unknown dynamics. The key idea is to modify the positive drift field so that the learned generator is trained toward an optimal target distribution rather than toward the raw dataset distribution. More precisely, we use an exponentially tilted offline prior, which increases the influence of optimal trajectories while preserving regularization toward the local support of the data. The learned generator is then used inside a best-of-$M$ receding-horizon planner: given the current state and a cost query, it proposes several candidate trajectories, their known quadratic cost is evaluated, and the first control of the best candidate is executed.

We prove that our method is theoretically sound and we empirically validate it by comparing it with several baselines, showing that our method can produce optimal controls with a speed close to executing the closed form solution.

\section{Related Work}\label{sec:related_work}

\paragraph{Trajectory modeling for offline decision-making}
A closely related line of work views offline decision-making as a sequence or trajectory modeling problem. Decision Transformer casts offline reinforcement learning as return-conditioned sequence generation \cite{chen2021decision,emmons2021rvs}, while Trajectory Transformer models trajectories autoregressively and performs planning in the learned sequence space \cite{NEURIPS2021_099fe6b0}.  More broadly, conditional generative modeling has also been proposed as a general framework for offline decision-making \cite{ajay2023Conditional}.  These works share the same high-level motivation as ours: replace explicit dynamic programming or repeated model rollouts with a learned model over trajectories. Our setting, however, is different in two key respects. First, we target one-step trajectory generation rather than autoregressive generation. Second, instead of conditioning on desired return or using search over a sequence model, we bias the learned distribution by changing the positive distribution in the drifting objective itself.

\paragraph{Diffusion-based planning and control}
Diffusion models \cite{ho2020denoising} have recently become a standard tool for generative planning. Diffuser formulates planning as conditional trajectory denoising and try to include constraints, or goal information through test-time guidance \cite{pmlr-v162-janner22a}. Related diffusion-based methods have also been used as policy classes for offline reinforcement learning and robot control \cite{wang2023Diffusion,chi2024Diffusion}. These methods form the most natural baseline family for our work. The main difference is computational: diffusion planners rely on multiple denoising steps at inference time, whereas Drifting MPC aims to learn a one-step proposal model that can be queried repeatedly inside a receding-horizon loop.

\paragraph{Offline model-based planning and learning-augmented MPC}
Another nearby literature studies offline planning through learned dynamics models. Model-based offline reinforcement learning and planning methods such as MBOP, MOPO, MOReL, and MOPP learn a surrogate model from static data and then optimize through that model at test time \cite{argenson2021ModelBased,yu2020mopo,kidambi2020morel,zhan2022ModelBased}. Closely related are learning-augmented MPC approaches, which use learned proposal distributions or sequence models to warm-start or regularize online planning \cite{sacks2022Learning,celestini2024transformer,russo2023Tubebased,10155796}. Drifting MPC is related to this line of work in that it is also used inside a receding-horizon controller, but it differs in a fundamental way: it does not roll candidate trajectories through a learned model or solve a trajectory optimization problem online.

\paragraph{Drifting models and regularized control.}
Our method builds directly on Drifting Models, which replace iterative denoising of diffusion models with a single pushforward map \cite{deng2026Generative}. Standard drifting, however, matches the data distribution and therefore recovers the offline behavior prior rather than an objective-aware planning distribution. Drifting MPC addresses exactly this limitation, so that the generator is biased towards optimal trajectories. This idea is also closely related to the broader control-as-inference view of regularized optimal control, where optimal control distributions arise by exponentially tilting a prior with task-dependent rewards or costs \cite{levine2018Reinforcement}. 
\section{Background and Problem Definition}\label{sec:background}
In receding-horizon control at every decision step the controller must synthesize a sequence of control actions that minimize some cost criterion. When an accurate model is available, this problems fits naturally within a model-predictive control (MPC) framework. In the regime considered in this paper, however, we assume the  agent does \emph{not} have access to the underlying transition law. We now describe the model considered in the paper and the problem definition.

\subsection{Setting}

\paragraph { Model} In the following we consider a discrete-time control system
$
\mathcal M = (\X, \U, f, \rho_0, H),
$
where $\X \subseteq \R^{d_x}$ is the state space, $\U \subseteq \R^{d_u}$ is the control space, $f$ is the unknown transition law over the next state, $\rho_0$ is the initial-state distribution, and $H \in \mathbb N$ is the planning horizon. Therefore, at each timestep the state evolves according to $x_{t+1}\sim f(\cdot\mid x_t,u_t)$, where $u_t$ is the control action at timestep $t$, and the initial state is $x_0\sim \rho$.

In the following, we write an $H$-step trajectory as
\[
\tau = (x_0,u_0,x_1,u_1,\ldots,x_{H-1},u_{H-1},x_H), \qquad x_0=x,
\]
where $x$ is the initial state, and by $\tau_t=(x_0,u_0,\dots, x_t,u_t)$ its truncation to the first $t$ transitions. 
We also denote by $\T_H(x)$ the set of all such $H$-step trajectories starting at $x$, and by $\T_H=\bigcup_{x\in\X}\T_H(x)$ the corresponding global trajectory space.

\paragraph{ Trajectory cost}
In the following, we associate to each trajectory $\tau$ a cost. Specifically, the finite-horizon cost associated with a trajectory $\tau\in\T_H(x)$ is
\begin{equation}
J_x(\tau;\omega)
\coloneqq
\sum_{t=0}^{H-1}\bigl(x_t^\top Q(\omega)x_t + u_t^\top R(\omega)u_t\bigr) + x_H^\top Q(\omega)x_H.
\label{eq:cost}
\end{equation}
where $Q(\omega)\in \mathbb{R}^{d_x\times d_x}$ and $R(\omega)\in \mathbb{R}^{d_u\times d_u}$ are semi-definite positive matrices  parametrized by  a cost parameter $\omega=(q,r)\in \Omega$ in a compact set $\Omega$. For simplicity, without loss of generality in the following we assume $Q(\omega)={\rm diag}(q)$ and $R(\omega)={\rm diag}(r)$ and
\[
\Omega = [q_{\min},q_{\max}]^{d_x}\times[r_{\min},r_{\max}]^{d_u},
\]
with non-negative bounds $q_{\rm min}\geq 0, q_{\rm max}\geq 0$.

\subsection{Problem Definition}
We are interested in deriving a data-driven receding-horizon control law to minimize the expected cost $J_x(\cdot;\omega)$ at any starting state $x$ and cost parameter $\omega$. Unlike previous work on data-driven methods, we consider a generative approach in which we learn a law $\mu_{x,\omega}$ over the set of trajectories $\T_H(x)$ for a given cost descriptor $\omega$. In particular, our goal is not to learn just any law, but rather a law  $\mu_{x,\omega}$  that minimizes the expected trajectory cost.

\paragraph{Objective} In the following, for any fixed $(x,\omega)\in \X\times \Omega$ we consider  the following optimization problem
\begin{equation}\label{eq:planning_problem}
\begin{aligned}
&\min_{\mu\in \Delta(\T_H(x))} 
 \E_{\tau \sim \mu}\left[J_x(\tau;\omega)\right] \\
 &\text{s.t. }\;
\mu\left({\rm d}x_{t+1}\mid \tau_t\right)
= f\left({\rm d}x_{t+1}\mid x_t,u_t\right)
\quad \text{a.s.} \;\;\forall t< H,
\end{aligned}
\end{equation}
where $\Delta(\T_H(x))$ denotes the set of probability measures over $H$-step trajectories starting at $x$.

\paragraph{Control law} We propose to use a minimizer of \cref{eq:planning_problem} to implement a receding-horizon control law. 
Assuming the agent can compute a minimizer $\mu_{x,\omega}^\star$ at any $(x,\omega)$, then, at any timestep $t$ after observing the state $x$, the agent can sample a trajectory $\tau \sim \mu_{x,\omega}^\star$ and execute the first control action $u_0$ in this trajectory. The agent then observes the next state and repeats the procedure.

Importantly, in this paper we rule out learning a model-based inner loop to approximate $\mu^\star$: therefore  we cannot score candidate controls by simulating them forward. This motivates an algorithimic design that learns a full distribution over $H$-step trajectories (and not only actions). At the same time, we also need to find a distribution $\mu$ that minimizes the expected cost.

\paragraph{Offline trajectories} Lastly,  we assume the agent has access to an offline dataset of trajectories
\begin{equation}\label{eq:offline_dataset}
\D = \{\tau_i\}_{i=1}^N,
\end{equation}
where each $\tau_i$ is a horizon-$H$ trajectory segment generated by some control law. In the following, we denote by $
 P_\D=\frac{1}{|\D|}\sum_{i=1}^{|\D|}\delta_{\tau_i}
$
 the  empirical distribution of trajectories in $\D$.

\subsection{Drifting Models}

Drifting models are one-step generative models that learn by iteratively transporting samples toward a desired target distribution $p$ \cite{deng2026Generative}. The goal of these models is to train a generator so that it produces a sample distributed according to $p$ in a single forward pass.
Henceforth, in contrast to diffusion models, which require multiple denoising steps at inference time, drifting models retain single-step generation, which makes them particularly appealing for receding-horizon control.

Mathematically, given some noise $\varepsilon \sim {\cal N}(0,I)$, a generator produces a sample
$
z = G_\theta(\varepsilon),
$
which induces a conditional distribution $q_\theta$. Then, at training time, the evolution of a sample $z$ is  governed by the following equation
$
z_{k+1}=z_k+ V_{p,q_\theta}(z_k)
$
where $V_{p,q_\theta}$ is a \emph{drift field} that quantifies the shift. Following \cite{deng2026Generative}, we obtain that  $V_{p,q_\theta}=0$ when $p=q_\theta$ (it is also  possible to give some sufficient conditions under which $V_{p,q_\theta}\approx 0$ implies $q_\theta\approx p$). Therefore,  the objective is to train $G_\theta$ so that $\E[\|V_{p,q_\theta}\|]\approx 0$.

To define a drift field $V_{p,q_\theta}$, the main idea is to ensure that it moves generated samples toward a chosen positive distribution $p$ while repelling them from the current model distribution $q_\theta$. To that aim, we define the drift field as 
\begin{equation}
V_{p,q_\theta}(z) = V_p^+(z) - V_{q_\theta}^-(z).
\label{eq:drifting_full_field_bg}
\end{equation}
where $V_p^+$ is the positive mean-shift field and $V_q^-(z)$ is the negative mean-shift field. To define these fields, we introduce a kernel that measures  local similarity
\[
k(z,z') = \exp\left(-\frac{\|z-z'\|_2^2}{T}\right),
\]
where $T > 0$ is a temperature parameter. Then, the positive and negative mean-shift fields are defined as
\begin{equation}
V_p^+(z)
=
\frac{\E_{z^+ \sim p}
\left[k\left(z,z^+\right)\left(z^+ - z\right)\right]}
{\E_{z^+ \sim p}
\left[k\left(z,z^+\right)\right]},
\label{eq:drifting_positive_field_bg}
\end{equation}
\begin{equation}
V_{q_\theta}^-(z)
=
\frac{\E_{z^- \sim q_\theta}
\left[k\left(z,z^-\right)\left(z^- - z\right)\right]}
{\E_{z^- \sim q_\theta}
\left[k\left(z,z^-\right)\right]}.
\label{eq:drifting_negative_field_bg}
\end{equation}
Training is performed with a fixed-point objective: if $z_\theta = G_\theta(\varepsilon)$, then the model is updated so that $z$ moves toward its drifted target,
\begin{equation}
\mathcal L_{\mathrm{drift}}(\theta)
=
\E_{z\sim q_\theta}
\left[
\left\|
z
-
\sg\left(z + V_{p,q_\theta}(z)\right)
\right\|_2^2
\right],
\label{eq:drifting_loss_bg}
\end{equation}
where $\sg(\cdot)$ denotes stop-gradient. At equilibrium, the generated distribution matches the chosen positive distribution.

\section{Method}\label{sec:method}
Our method revolves  around solving \cref{eq:planning_problem} using an offline dataset $\D$, and using the minimizer $\mu^\star$ to compute a receding-horizon control law. We propose Drifting MPC, a method based  on Drifting Models \cite{deng2026Generative} to learn a minimizer $\mu^\star$ for any $(x,\omega)$.   The goal of Drifting MPC is  to learn, from the offline dataset $\D$, a conditional generator that maps noise, the current state, and the cost parameter to a distribution over relative trajectories that is both supported by the data and skewed toward low-cost plans. The learned generator is then used as a proposal mechanism inside a best-of-$M$ receding-horizon planner.
\\

The central question is how to modify the drift modeling approach to learn $\mu^\star$ given an offline dataset $\D$. In fact, if one simply applied the drift modeling approach to learn a distribution $\mu \approx P_\D$, we do not necessarily have that $\mu$ minimizes \cref{eq:planning_problem}. We propose to change the positive mean-shift field $V_{P_\D}^+$ in drift modeling so that the learned generator is not only faithful to the offline data, but also biased toward trajectories that are useful for control.

In the following subsection, we introduce a method to shift a (conditioned) positive drift field, so that we can approximately solve \cref{eq:planning_problem}.

\subsection{Conditionally Shifted Drift Fields}
We now explain how to modify the drift modeling approach to learn shifted distributions conditioned on some query $c$.
In the following we denote a planning query by $c=(x_0,\omega)\in\X\times\Omega$, and  define  a conditional generator as a parametrized model  $G_\theta:\mathbb{R}^{d_\epsilon}\times \mathbb{R}^{2d_x+d_u}\to \mathbb{R}^{H\cdot (d_x+d_u)+d_x}$ that takes as input a conditioning query $c$ and noise $\epsilon\sim p_\epsilon$ (with $p_\epsilon={\cal N}(0,I)$). The genorator maps noise to samples  $z=G_\theta(\epsilon,c)$: hence,  the generator
induces the conditional pushforward distribution
\[
q_\theta(\cdot\mid c) = [G_\theta(\cdot,c)]_\# p_\epsilon.
\]

\subsubsection{Target Distribution} The goal of generative modeling is to train $G_\theta$ so that it induces some desired target distribution. Assume we have some prior distribution $p_0(\cdot\mid x_0)$ over trajectories  with starting state $x_0$ and  satisfying the transition law $f$.
The key idea is to replace this prior with a cost-aware positive distribution. For an inverse temperature $\beta>0$, define the tilted target
\begin{equation}
p_\beta({\rm d}\tau\mid c)
\propto
\exp\left(-\beta J_{x_0}(\tau;\omega)\right)p_0({\rm d}\tau\mid x_0).
\label{eq:idealtilt}
\end{equation}
where the weight $\exp\left(-\beta J_{x_0}(\tau;\omega)\right)$ down-weights trajectories with larger cost, and promotes trajectories with lower cost.

We can show than this distribution $p_\beta$ solves a regularized version of \cref{eq:planning_problem}.

\begin{theorem}[Variational characterization of the tilted distribution]
\label{thm:variational}
Fix $c=(x_0,\omega)$ and $\beta>0$. Let $p_0(\cdot\mid x_0)$ be a reference distribution over  trajectories and define
\begin{equation}
p_\beta(d\tau\mid c)
=
\frac{e^{-\beta J_{x_0}(\tau;\omega)}}{Z_\beta(c)}p_0(d\tau\mid x_0),
\label{eq:tiltmeasure}
\end{equation}
where
$
Z_\beta(c)=\int e^{-\beta J_{x_0}(\tau;\omega)}p_0(d\tau\mid x_0).
$
Then $p_\beta(\cdot\mid c)$ is the unique minimizer of
\begin{equation}
\begin{split}
\min_{p\ll p_0(\cdot\mid x_0)} \E_{\tau\sim p}\left[J_{x_0}(\tau;\omega)\right] + \frac{1}{\beta}\KL\left(p(\cdot\mid c)\|p_0(\cdot\mid x_0)\right).
\end{split}
\label{eq:freeenergy}
\end{equation}
\end{theorem}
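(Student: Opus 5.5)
The approach is the standard Gibbs variational principle (Donsker--Varadhan): rewrite the objective in \cref{eq:freeenergy} as a single KL divergence to $p_\beta(\cdot\mid c)$ plus a constant. First, check that everything is well defined. Since $Q(\omega),R(\omega)\succeq 0$, we have $J_{x_0}(\tau;\omega)\ge 0$, so $0<e^{-\beta J}\le 1$. Hence $0<Z_\beta(c)\le 1$ (positivity because the integrand is strictly positive and $p_0$ is a probability measure). Consequently $p_\beta$ is a well-defined probability measure, and it is mutually absolutely continuous with $p_0$, with density $\frac{dp_\beta}{dp_0}=e^{-\beta J}/Z_\beta(c)$.

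Next, take any feasible $p\ll p_0$. If the objective is finite, then $\KL(p\|p_0)<\infty$ and $\E_p[J]<\infty$, since both terms are nonnegative. Because $p_0\ll p_\beta$, we also get $p\ll p_\beta$, and the chain rule gives
\[
\log\frac{dp}{dp_0}=\log\frac{dp}{dp_\beta}+\log\frac{dp_\beta}{dp_0}=\log\frac{dp}{dp_\beta}-\beta J_{x_0}(\tau;\omega)-\log Z_\beta(c).
\]
Integrating against $p$ and dividing by $\beta$ yields the identity
\[
\E_{p}[J]+\tfrac{1}{\beta}\KL(p\|p_0)=\tfrac{1}{\beta}\KL(p\|p_\beta)-\tfrac{1}{\beta}\log Z_\beta(c).
\]

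The main technical care is in justifying this integration. We must split $\E_p[\log\frac{dp}{dp_0}]$ into the two integrals, which requires each to be well defined. Since $\E_p[J]<\infty$ and $\log Z_\beta$ is finite, the middle term is integrable, so linearity applies. The case where the left-hand side is $+\infty$ needs separate handling. There I would argue via the same pointwise identity, using that the negative parts of $\log\frac{dp}{dp_\beta}$ are $p$-integrable (the standard fact that KL integrands have integrable negative part). This shows both sides are infinite together.

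Finally, by Gibbs' inequality $\KL(p\|p_\beta)\ge 0$, with equality if and only if $p=p_\beta$. Therefore the objective is bounded below by $-\frac1\beta\log Z_\beta(c)$. The bound is attained at $p=p_\beta$, which is feasible, and only there. This proves that $p_\beta$ is the unique minimizer and identifies the optimal value as $-\frac1\beta\log Z_\beta(c)$.

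I expect the measure-theoretic bookkeeping in the previous step, rather than any real conceptual difficulty, to be the only obstacle.
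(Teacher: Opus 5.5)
Your proposal is correct and follows essentially the same route as the paper. Both integrate the pointwise Radon--Nikodym identity $\log\frac{dp}{dp_0}=\log\frac{dp}{dp_\beta}-\beta J_{x_0}(\tau;\omega)-\log Z_\beta(c)$ against $p$, rewrite the objective as $\frac{1}{\beta}\KL(p\|p_\beta)-\frac{1}{\beta}\log Z_\beta(c)$, and conclude with Gibbs' inequality. Your extra bookkeeping ($J\ge 0$ so $0<Z_\beta(c)\le 1$, mutual absolute continuity of $p_0$ and $p_\beta$, integrability when splitting the expectation) fills in details the paper leaves implicit. The infinite-objective case is simpler than you make it: the objective at $p_\beta$ is finite because $Je^{-\beta J}\le 1/(e\beta)$, so no $p$ with infinite objective can compete.
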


\begin{proof}
For any $p\ll p_0(\cdot\mid x_0)$,
\[
\log \frac{dp}{dp_\beta}
=
\log \frac{dp}{dp_0} + \beta J_{x_0}(\tau;\omega) + \log Z_\beta(c).
\]
Taking expectation with respect to $p$ yields
\[
\KL(p\|p_\beta)
=
\KL(p\|p_0)+\beta\E_p[J_{x_0}(\tau;\omega)] + \log Z_\beta(c).
\]
Rearranging gives
\begin{align*}
\E_p[J_{x_0}(\tau;\omega)] + \tfrac{1}{\beta}\bigl(\KL(p\|p_0) + \log Z_\beta(c)\bigr)
= \tfrac{1}{\beta}\KL(p\|p_\beta).
\end{align*}
The first term on the r.h.s. is constant in $p$, while the second is non-negative and vanishes only when $p=p_\beta$.
\end{proof}

\Cref{thm:variational}  shows that $p_\beta$ is not merely favoring low-cost trajectories: it is targeting the  solution of a  problem that trades off two competing objectives: minimizing control cost and remaining close to the offline trajectory prior. The distribution is  characterized by $\beta$, which defines how aggressively the the tilted distribution shifts away from the prior $p_0$ towards lost-cost trajectories. From a practical perspective, early in training small values of $\beta$ may be beneficial, as it recovers a behavior-like prior and stabilize optimization. As training proceeds, larger values of $\beta$ gradually transform the same local prior into a sharper, more optimization-oriented target.
\\

\subsubsection{Tilting Lemma} In the following result, we show how $p_\beta$ can be learned using drift modeling. The idea is to tilt the positive drift field $V_{p_0}^+$. Recall that to define a drifting field we require a kernel that measures  local similarity. We use a Gaussian kernel
\[
k(\tau,\tau') = \exp\left(-\frac{\|\tau-\tau'\|_2^2}{T}\right),
\]
where $T > 0$ is a temperature parameter. Then, given an initial state $x_0$ and a trajectory $\tau\in \T_H(x_0)$, the positive  mean-shift field for $p_0$ in $x_0$ is defined as
\begin{equation}
V_{p_0}^+(\tau;x_0)
=
\frac{\E_{\tau^+ \sim p_0(\cdot\mid x_0)}
\left[k\left(\tau,\tau^+\right)\left(\tau^+ - \tau\right)\right]}
{\E_{\tau^+ \sim p_0(\cdot\mid x_0)}
\left[k\left(\tau,\tau^+\right)\right]},
\label{eq:drifting_positive_field_p0}
\end{equation}
and similarly for $p_\beta$ we have
\begin{equation}
V_{p_\beta}^+(\tau;x_0)
=
\frac{\E_{\tau^+ \sim p_\beta(\cdot\mid x_0)}
\left[k\left(\tau,\tau^+\right)\left(\tau^+ - \tau\right)\right]}
{\E_{\tau^+ \sim p_\beta(\cdot\mid x_0)}
\left[k\left(\tau,\tau^+\right)\right]}.
\label{eq:drifting_positive_field_pbeta}
\end{equation}
Using the simple fact that $p_\beta\propto \exp(-\beta J_{x_0})p_0$, we find the following immediate result implying that it is sufficient to tilt the original mean-shift drift.
\begin{lemma}[Tilting]\label{lem:weighted_positives_tilt}
Fix $c=(x_0,\omega)$ and 
let $p_0(\cdot|x_0)$ be a reference distribution over trajectories. Define the weight $w_\beta(\tau;c)=\exp(-\beta J_{x_0}(\tau;\omega))$.
Define the weighted mean-shifted operator:
\[
V_{p_0}^+(\tau;\beta,c)\coloneqq \frac{\mathbb{E}_{\tau'\sim p_0(\cdot\mid x_0)}\left[w_\beta(\tau';c)k(\tau,\tau')(\tau'-\tau)\right]}{\mathbb{E}_{\tau'\sim p_0(\cdot\mid x_0)}\left[w_\beta(\tau';c)k(\tau,\tau')\right]}.
\]

We have that
\[
V_{p_\beta}^+(\tau;x_0)=V_{p_0}^+(\tau;\beta,c)\quad  p_0-a.e.
\]
\end{lemma}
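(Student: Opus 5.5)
The plan is a change of measure: rewrite every expectation under $p_\beta(\cdot\mid c)$ as a weighted expectation under $p_0(\cdot\mid x_0)$, then cancel the normalizing constant $Z_\beta(c)$ between numerator and denominator. We first check that everything is well defined. Because $\Omega$ consists of nonnegative diagonal entries, $Q(\omega)$ and $R(\omega)$ are positive semidefinite, so $J_{x_0}(\tau;\omega)\ge 0$. Hence $0<w_\beta(\tau;c)\le 1$ for every $\tau$. It follows that $0<Z_\beta(c)\le 1$, so $p_\beta(\cdot\mid c)$ in \cref{eq:tiltmeasure} is a genuine probability measure with $p_\beta\ll p_0$, and its Radon--Nikodym derivative is $\frac{dp_\beta}{dp_0}(\tau')=w_\beta(\tau';c)/Z_\beta(c)$.

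Next we apply this density to each part of \cref{eq:drifting_positive_field_pbeta}. For any measurable $g$ that is integrable under $p_\beta$, we have $\E_{\tau'\sim p_\beta}[g(\tau')]=Z_\beta(c)^{-1}\E_{\tau'\sim p_0}[w_\beta(\tau';c)g(\tau')]$. We use this twice.
\begin{itemize}
\item For the denominator, take $g(\tau')=k(\tau,\tau')$.
\item For the numerator, take the vector-valued $g(\tau')=k(\tau,\tau')(\tau'-\tau)$, applied componentwise.
\end{itemize}
The common factor $Z_\beta(c)^{-1}$ then cancels in the ratio, which leaves exactly the weighted operator $V_{p_0}^+(\tau;\beta,c)$.

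The main obstacle is verifying that the ratio is well defined, that is, that the denominator is positive and finite and the numerator is integrable. Finiteness of the denominator is immediate, since $0<k\le 1$ and $w_\beta\le 1$. For the numerator we need $\E_{p_0}[w_\beta k\,\|\tau'-\tau\|]<\infty$. This holds because $k(\tau,\tau')\|\tau'-\tau\|\le \sup_{r\ge0} r e^{-r^2/T}<\infty$, so no moment assumption on $p_0$ is required. For positivity of the denominator, the integrand $w_\beta(\tau';c)k(\tau,\tau')$ is strictly positive at every $\tau'$ and $p_0$ is a probability measure, so $\E_{p_0}[w_\beta k]>0$ for every $\tau$. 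The same argument gives $\E_{p_\beta}[k]>0$.

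The identity therefore holds for every $\tau$, which is stronger than the stated $p_0$-a.e. version. We will remark that the a.e. qualifier is only needed if one allows a more general kernel that can vanish, or views the fields as elements of $L^2(p_0)$. In that case we would restrict to the set of $\tau$ where the denominators are positive and argue that its complement is $p_0$-null.
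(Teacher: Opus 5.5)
Your proof is correct and follows the same route as the paper's: write the numerator and denominator of $V_{p_\beta}^+$ as $w_\beta$-weighted expectations under $p_0(\cdot\mid x_0)$ using the density $w_\beta/Z_\beta(c)$, then cancel $Z_\beta(c)$. Your extra well-posedness checks are sound and cover what the paper leaves implicit: $J_{x_0}\ge 0$ gives $0<Z_\beta(c)\le 1$, the denominator is strictly positive, and $k(\tau,\tau')\|\tau'-\tau\|\le\sup_{r\ge0}re^{-r^2/T}$ gives integrability; together they yield the identity for every $\tau$, not only $p_0$-a.e.
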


\begin{proof}
By definition of $p_\beta$, for any integrable $h$ we have
\begin{align*}
\mathbb{E}_{\tau'\sim p_\beta}[h(\tau)]
&=\int h(\tau')\frac{w_\beta(\tau';c)}{Z}p_0({\rm d}\tau'\mid x_0),\\
&=\frac{1}{Z}\mathbb{E}_{\tau'\sim p_0(\cdot\mid x_0)}[w(\tau';c) h(\tau')].
\end{align*}
Apply this identity in $V_{p_\beta}^+(\tau;x_0)$ with $h(\tau')=k(\tau,\tau')(\tau'-\tau)$ for the numerator and
$h(\tau')=k(\tau,\tau')$ for the denominator. The factor $1/Z$ cancels between numerator and denominator, yielding the claim.
\end{proof}
This result is what makes Drifting MPC implementable. It shows that the algorithm only needs relative importance weights of the form $e^{-\beta J}$, not samples from a globally normalized target distribution.  The proposition therefore provides the formal bridge between the ideal cost-aware target and the practical minibatch-level computation carried out during training.

\subsubsection{Drift Loss} We are now ready to define the drift loss used to train the generator $G_\theta$.
First, we define $\hat p_0(\cdot\mid x_0)$ as an empirical prior computed using the offline data $\D$.

We use the following empirical prior
\begin{equation}
\hat p_0(\cdot\mid x_0)
\coloneqq
\sum_{i=1}^N \alpha_i(x_0)\,\delta_{\tau_i}(\cdot),
\label{eq:localprior}
\end{equation}
where $\alpha_i$ is  defined as the following normalized weight
\[
\alpha_i(x_0) \propto k_x(x_0,x_i)\,\mathbf 1\{i\in\mathcal N_K(x_0)\},
\]
where $x_i$ is the initial state of trajectory $\tau_i$. The kernel $k_x:\mathbb{R}^{d_x}\times \mathbb{R}^{d_x}\to [0,\infty)$  measures the similarity between $x_0$ and $x_i$   and $\mathcal N_K(x_0)$ is the set of $K$ nearest neighbors retrieved from the offline dataset $\D$ defined according to some distance. Intuitively, \eqref{eq:localprior} focuses the prior generator on trajectories that are compatible with $x_0$, and with $K_x(x_0,x_i)=\mathbf{1}_{\{x_0=x_i\}}$ we retrieve the true empirical prior.
\\

Then, for a given $c=(x_0,\omega)$, we define the  empirical drift loss  by the following fixed-point objective
\begin{equation}
\mathcal L_{\rm drift}(\theta;c)
=
\E\left[\norm{\tau-\sg\left(\tau+\widehat V_{\hat p_0, q_\theta}(\tau;\beta,c)\right)}_2^2\right],
\label{eq:loss}
\end{equation}
where the empirical drift field $\widehat V_{\hat p_0, q_\theta}(\tau;\beta,c)$ for $\tau\sim q_\theta(\cdot\mid c)$ is defined as
\[
\widehat V_{\hat p_0, q_\theta}(\tau;\beta,c)= \widehat V_{\hat p_0}^+(\tau;\beta,c)- \widehat V_{q_\theta}^-(\tau;c),
\]
with positive drift field $\widehat V_{\hat p_0}^+(\tau;\beta,c)$ and negative drift field  $\widehat V_{q_\theta}^-(\tau;c)$. These fields are defined as follows
\begin{align*}
    \widehat V_{\hat p_0}^+(\tau;\beta,c)&\coloneqq {\E}_{{\cal B}^+,\tau^+}\left[\tilde k_{{\cal B,\beta}}^+(\tau,\tau^+;c) (\tau^+-\tau) \right],\\
    \widehat V_{q_\theta}^-(\tau;c)&\coloneqq {\E}_{{\cal B}^-,\tau^-}\left[ \tilde k_{{\cal B}}^-(\tau,\tau^-;c)(\tau^ --\tau) \right]
\end{align*}
with $\tau^+\sim \hat p_0(\cdot\mid x_0),\; \tau^-\sim q_\theta(\cdot\mid c)$. The batch  ${\cal B}=\{{\cal B}^+,{\cal B}^-\}$ is used to empirically approximate the weights, and it  contains positive samples \[{\cal B}^+=(\tau_i^+)_{i=1}^K,\;\tau_i^+\sim \hat p_0(\cdot\mid x_0),\]  and negative samples \[{\cal B}^-=(\tau_i^-)_{i=1}^M,\;\tau_i^-\sim q_\theta(\cdot\mid c).\]
Then, the normalized  positive weights $\tilde k_{\cal B}^+(\tau,\tau^+)$  for the positive field is defined as
\begin{equation}\tilde k_{{\cal B},\beta}^+(\tau,\tau^+;c)= 
\frac{e^{-\beta \widetilde J_i(\omega)}\,k(\tau,\tau^+)}{\sum_{\tau_j\in {\cal B}^+} e^{-\beta \widetilde J_j(\omega)}\,k(\tau,\tau_j)}.
\label{eq:weights_pos}
\end{equation}
where we used \cref{lem:weighted_positives_tilt} and  $\widetilde J_i(\omega)$ denotes the relabeled cost of the $i$-th retrieved trajectory in ${\cal B}^+$ under the query parameter $\omega$.  Lastly, the normalized weight for the negative field is
\begin{equation}\tilde k_{{\cal B}}^-(\tau,\tau^-;c)= 
\frac{k(\tau,\tau^-)}{\sum_{\tau_j\in {\cal B}^-\setminus\{\tau\}} k(\tau,\tau_j)}.
\label{eq:weights_neg}
\end{equation}
Hence, \Cref{eq:loss} moves each generated sample toward a local, cost-aware mean shift of the offline data while repelling it from the current model distribution.

\subsection{Full algorithm}\label{sec:algorithm}

We now describe how Drifting MPC is trained in practice and how it is used at test time.

\paragraph{Training phase}
The training objective is obtained by averaging the conditional drift loss \cref{eq:loss} over a distribution of planning queries $c$.
To that end, let
\[
\hat \rho_0 \coloneqq \frac{1}{N}\sum_{i=1}^N \delta_{x_i}
\]
denote the empirical distribution of initial states in the offline dataset, where $x_i$ is the initial state of trajectory $\tau_i$.
At each training step, we sample an initial state
$
x_0 \sim \hat \rho_0$
and independently sample a cost parameter
$
\omega \sim \mathrm{Unif}(\Omega).
$
\begin{algorithm}[t]
\footnotesize
\caption{Training Drifting MPC}
\label{alg:train}
\begin{algorithmic}[1]
\REQUIRE offline dataset $\D=\{\tau_i\}_{i=1}^N$; generator $G_\theta$; neighborhood size $K$; negative batch size $M$; query distribution $\hat\rho_0\times \mathrm{Unif}(\Omega)$; inverse-temperature schedule $\beta$
\FOR{each stochastic gradient step}
    \STATE Sample a  batch of queries $c_b=(x_0^{(b)},\omega^{(b)})$, $b=1,\ldots,B$, with
    $
    x_0^{(b)}\sim \hat\rho_0$ and $
    \omega^{(b)}\sim \mathrm{Unif}(\Omega)$.
    \FOR{$b=1,\dots,B$}
        \STATE Construct positive batch ${\cal B}_b^+=(\tau_i^+)_{i=1}^K$ from $\hat p_0(\cdot\mid x_0^{(b)})$ and compute $\widetilde J_i(\omega^{(b)})$ for each $\tau_i^+\in{\cal B}_b^+$.
        \STATE Sample $\epsilon_1,\ldots,\epsilon_M\sim {\cal N}(0,I)$ and construct a negative batch
        $
        {\cal B}_b^-=(\tau_1^-,\ldots,\tau_M^-),\;
        \tau_j^- = G_\theta(\epsilon_j,c_b)
        $.
        \FOR{$j=1,\dots,M$}
            \STATE Compute the  fields $\widehat V_{\hat p_0}^+(\tau_j^-;\beta,c_b),\widehat V_{q_\theta}^-(\tau_j^-;c_b)$  using \cref{eq:weights_pos,eq:weights_neg} and set
            \[
            \widehat V_{\hat p_0,q_\theta}(\tau_j^-;\beta,c_b)
            =
            \widehat V_{\hat p_0}^+(\tau_j^-;\beta,c_b)
            -
            \widehat V_{q_\theta}^-(\tau_j^-;c_b)
            \]
        \ENDFOR
        \STATE Form the empirical query loss
        \[
        \widehat{\mathcal L}_{\mathrm{drift}}(\theta;c_b)
        =
        \tfrac{1}{M}\sum_{j=1}^M
        \norm{
        \tau_j^- -
        \sg\!\left(
        \tau_j^- + \widehat V_{\hat p_0,q_\theta}(\tau_j^-;\beta,c_b)
        \right)
        }_2^2
        \]
    \ENDFOR
    \STATE update $\theta$ by a gradient step on $\frac{1}{B}\sum_{b=1}^B \widehat{\mathcal L}_{\mathrm{drift}}(\theta;c_b)$
\ENDFOR
\end{algorithmic}
\end{algorithm}
This induces a random query $c=(x_0,\omega)$.
Sampling queries in this way amounts to a form of \emph{meta-training}: rather than learning a generator for a single fixed objective, the model is trained over a family of  problems indexed by both the current state and the cost parameter.
The generator therefore learns an amortized map from  queries to low-cost trajectory proposals.

Formally, the training objective is
\begin{equation}
\mathcal L_{\mathrm{train}}(\theta)
=
\mathbb E_{x_0\sim \hat\rho_0, \,\omega\sim \mathrm{Unif}(\Omega)}
\left[
\mathcal L_{\mathrm{drift}}(\theta;(x_0,\omega))
\right],
\label{eq:train_objective}
\end{equation}
where $\mathcal L_{\mathrm{drift}}(\theta;c)$ is the conditional fixed-point loss in \cref{eq:loss}.
In practice, \cref{eq:train_objective} is optimized by stochastic gradient descent using Monte Carlo approximations of both the positive and negative drift fields.

For a sampled query $c=(x_0,\omega)$, we first construct a local positive batch
\[
{\cal B}^+ = (\tau_1^+,\ldots,\tau_K^+), \qquad \tau_i^+ \sim \hat p_0(\cdot\mid x_0),
\]
by retrieving trajectories whose initial states are close to $x_0$ according to the weights $\alpha_i(x_0)$ in \cref{eq:localprior}.
These trajectories are then \emph{relabeled} using the sampled cost parameter $\omega$, producing relabeled costs
\[
\widetilde J_i(\omega) = J_{x_i}(\tau_i^+;\omega),
\]
which enter the positive weights in \cref{eq:weights_pos}.
Next, we sample a negative batch
\[
{\cal B}^- = (\tau_1^-,\ldots,\tau_M^-), \qquad \tau_j^- = G_\theta(\epsilon_j,c),\ \epsilon_j\sim{\cal N}(0,I),
\]
from the current generator.
The empirical positive and negative drift fields are then computed from ${\cal B}^+$ and ${\cal B}^-$ using \cref{eq:weights_pos,eq:weights_neg}, and the loss \cref{eq:loss} is evaluated on the generated samples.
The full training procedure is summarized in \cref{alg:train}.

\paragraph{Inference  and receding-horizon control}
\begin{algorithm}[t]
\footnotesize
\caption{Receding-horizon planning with Drifting MPC}
\label{alg:plan}
\begin{algorithmic}[1]
\REQUIRE current state $x$; cost parameter $\omega$; trained generator $G_\theta$; number of candidates $M_{\mathrm{plan}}$
\STATE Construct the query $c=(x,\omega)$
\STATE Sample $m=1,\ldots,M_{\mathrm{plan}}$ candidate trajectories from the generator $\tau^{(m)} = G_\theta(\epsilon_m,c), \;\epsilon_m\sim {\cal N}(0,I)$, and  evaluate their cost \[C_m = J_x(\tau^{(m)};\omega).\]
\STATE Choose $m^\star = \arg\min_{1\le m\leq M_{\rm plan}} C_m$
\STATE Execute the first control of $\tau^{(m^\star)}$, observe the next state, and repeat.
\end{algorithmic}
\end{algorithm}
Once training is complete, the generator is used as a one-step proposal mechanism inside a best-of-$M$ MPC loop (the corresponding test-time procedure is summarized in \cref{alg:plan}).
Given the current state $x$ and a query cost parameter $\omega$, we form the planning query $c=(x,\omega)$ and sample
\[
\tau^{(m)} = G_\theta(\epsilon_m,c), \quad \epsilon_m\sim {\cal N}(0,I), \quad m=1,\ldots,M_{\mathrm{plan}}.
\]
Each sampled trajectory is then evaluated under the true objective using
$
C_m = J_x(\tau^{(m)};\omega)
$.
The planner selects the lowest-cost candidate (tie breaks arbitrarily),
\[
m^\star = \arg\min_{1\le m\le M_{\mathrm{plan}}} C_m,
\]
and executes only the first control of $\tau^{(m^\star)}$.
After the next state is observed, the whole procedure is repeated.
This yields a receding-horizon controller that combines an amortized, cost-aware trajectory generator with online selection among a small number of sampled plans.
\\

For this type of inference mechanism, we are able to provide the following Best-of-$M$ guarantee over $T$  steps. We consider a fix $\omega$ cost, and introduce the following set of $\delta$-optimal trajectories for $\delta>0$:
\[
{\cal A}_\delta(x,\omega)=\left\{\tau\in \T_H(x): J_{x}(\tau;\omega) \leq J_x^\star(\omega)+\delta\right \}
\]
for $c=(x,\omega)$ and $J_x^\star(\omega)\coloneqq {\rm ess\inf}_{\tau \in {\cal T}_H(x)} J_{x}(\tau;\omega)$, where the essential infimum is taken with respect to the underlying trajectory law induced by $(\rho_0,f)$. We also let $\hat \tau_t\in \argmin_{m=1,\dots, M_{\rm plan}} J_{x_t}(\tau_t^{(m)};\omega)$ be the trajectory selected by the planner, with $(\tau_{t}^{(i)})_{i}$ sampled i.i.d. from $q_\theta(\cdot\mid x_t,\omega)$, and let  
\[E_{\delta,t}=\{J_{x_t}(\hat \tau_t;\omega)>J_{x_t}^\star(\omega)+\delta\}
\]
be the event  that the planner is $\delta$-suboptimal at timestep $t$. 

Then, the following theorem gives a best-of-$M$ guarantee for the receding-horizon planner induced by Drifting MPC.
\begin{theorem}
  
Fix $\omega\in \Omega$, and let $p_\beta$ be as in \cref{eq:tiltmeasure}. Assume there exists $\epsilon,\eta>0$ such that
    \begin{align*}
    &{\rm ess\ sup}_{x\in \X} d_{\rm TV}(q_\theta(\cdot\mid x,\omega), p_\beta(\cdot\mid x,\omega)) \leq \varepsilon(\omega),\\
    &{\rm ess\ inf}_{x\in \X} p_\beta(A_\delta(x,\omega) \mid x,\omega) \geq \eta(\omega) >0.
    \end{align*}
    
    Then, for any $t\in \{0,\dots, T-1\}$ we have
    \begin{equation}
        \mathbb{P}\left(\bigcup_{t=0}^{T-1}  E_{\delta,t}\ \Big |\ \omega\right)\leq T\exp\left(- M_{\rm plan}\max(0,\eta(\omega)-\varepsilon(\omega))\right).
    \end{equation}
\end{theorem}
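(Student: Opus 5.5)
The plan is a per-step bound on the probability that the best-of-$M_{\rm plan}$ selection fails, followed by a union bound over the $T$ planning steps.

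\emph{Step 1 (single-sample success probability).} Fix a timestep $t$ and condition on the state $x_t$ (more precisely, on the history up to time $t$). The candidates at time $t$ are then i.i.d. from $q_\theta(\cdot\mid x_t,\omega)$. By definition of total variation, for the measurable set $A_\delta(x_t,\omega)$ we have
\[
q_\theta(A_\delta(x_t,\omega)\mid x_t,\omega)\ \ge\ p_\beta(A_\delta(x_t,\omega)\mid x_t,\omega)-d_{\rm TV}\bigl(q_\theta(\cdot\mid x_t,\omega),p_\beta(\cdot\mid x_t,\omega)\bigr).
\]
Using the two essential bounds in the statement, this gives
\[
q_\theta(A_\delta(x_t,\omega)\mid x_t,\omega)\ \ge\ \eta(\omega)-\varepsilon(\omega).
\]
Since the left-hand side is also nonnegative, it is at least $\gamma\coloneqq\max(0,\eta(\omega)-\varepsilon(\omega))$.

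A technical point here is that the assumptions hold only essentially in $x$. I will therefore need the law of $x_t$ under the closed loop to be absolutely continuous with respect to the reference measure in the ess sup/ess inf. I would state this as implicit, or interpret the essential bounds with respect to the closed-loop state distributions.

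\emph{Step 2 (best-of-$M$ failure).} The selected trajectory $\hat\tau_t$ minimizes $J_{x_t}(\cdot;\omega)$ over the candidates. Hence $E_{\delta,t}$ occurs only if no candidate lies in $A_\delta(x_t,\omega)$: if some candidate had cost at most $J^\star+\delta$, the minimizer would too. By conditional independence of the $M_{\rm plan}$ draws,
\[
\mathbb P(E_{\delta,t}\mid x_t,\omega)\le (1-\gamma)^{M_{\rm plan}}\le e^{-M_{\rm plan}\gamma},
\]
using $1-u\le e^{-u}$. Since this bound does not depend on $x_t$, taking expectation over $x_t$ gives the same bound for $\mathbb P(E_{\delta,t}\mid\omega)$. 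Only the marginal per-step bound is needed, so no independence across time is required, even though $x_t$ depends on previous selections.

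\emph{Step 3 (union bound).} Apply the union bound:
\[
\mathbb P\Bigl(\bigcup_{t=0}^{T-1}E_{\delta,t}\Bigm|\omega\Bigr)\le \sum_{t=0}^{T-1}\mathbb P(E_{\delta,t}\mid\omega),
\]
which yields $T e^{-M_{\rm plan}\gamma}$. When $\eta\le\varepsilon$ the bound is trivially $T\ge 1$, which is consistent with the $\max(0,\cdot)$ in the statement.

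I expect the only delicate step to be the measure-theoretic one in Step 1. The per-step bounds must hold almost surely along the closed-loop trajectory, and this requires the conditioning on the history to be handled via a regular conditional distribution. I would address it first by noting that the fresh noise $\epsilon_m$ at time $t$ is independent of the past, so that conditional on $\mathcal F_t$ the candidates are exactly i.i.d. $q_\theta(\cdot\mid x_t,\omega)$.
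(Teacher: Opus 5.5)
Your proposal is correct and follows essentially the same route as the paper: conditional on the history $\mathcal F_t$ the candidates are i.i.d.\ from $q_\theta(\cdot\mid x_t,\omega)$, the total-variation bound gives $q_\theta(A_\delta(x_t,\omega)\mid x_t,\omega)\ge[\eta(\omega)-\varepsilon(\omega)]_+$, and the argument closes with $(1-u)^{M_{\rm plan}}\le e^{-M_{\rm plan}u}$, the tower rule, and a union bound over the $T$ steps. You also note that the essential bounds must hold along the closed-loop law of $x_t$, which is an absolute-continuity requirement; the paper's proof leaves this implicit, so raising it strengthens your argument rather than exposing a gap.
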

\begin{proof}
    Let $\{{\cal F}_t\}_{t\geq 1}$ be the filtration  of the history up to  step $t$, and let $c_t=(x_t,\omega)$ be the query at step $t$. Fix a timestep $t\geq 0$. Given the current query $c_t$, the candidates $(\tau_t^{(i)})_{i=1}^{M_{\rm plan}}$ are sampled i.i.d. from $q_\theta(\cdot\mid c_t)$. Hence,
    \begin{align*}
    \mathbb{P}(E_{\delta,t} \mid {\cal F}_t,\omega)&=\prod_{m=1}^{M_{\rm plan}} \mathbb{P}\left(\tau_t^{(m)}\notin A_\delta(x_t,\omega)\mid {\cal F}_t,\omega\right),\\
    &=\left[1- q_\theta(A_\delta(x_t,\omega)\mid c_t)\right]^{M_{\rm plan}}.
    \end{align*}
    Now, since $d_{\rm TV}(q_\theta(\cdot\mid c_t),p_\beta(\cdot\mid c_t))=\sup_{\text{meas. } A} |q_\theta(A\mid c_t)- p_{\beta}(A\mid c_t)|$, it follows that for any measurable event $A$ we have
    \[
    q_\theta(A\mid c_t)\geq [p_{\beta}(A\mid c_t)-d_{\rm TV}(q_\theta(\cdot\mid c_t),p_\beta(\cdot\mid c_t))]_+,
    \]
    where $[x]_+=\max(x,0)$.
    Applying this result with  $A=A_\delta(x_t,\omega) $ we get
    \begin{align*}
    \mathbb{P}&(E_{\delta,t} \mid {\cal F}_t,\omega)\\
    &\leq \left[1- p_{\beta}(A\mid c_t)+d_{\rm TV}(q_\theta(\cdot\mid c_t),p_\beta(\cdot\mid c_t))\right]_+^{M_{\rm plan}},\\
    &\leq \exp\left\{-M_{\rm plan} \left[p_{\beta}(A\mid c_t)-d_{\rm TV}(q_\theta(\cdot\mid c_t),p_\beta(\cdot\mid c_t))\right]_+ \right\},\\
    &\leq \exp\left\{-M_{\rm plan} \left[\eta(\omega)-\varepsilon(\omega)\right]_+ \right\}.
    \end{align*}
    The conclusion follows from a tower rule argument and a union bound over timesteps.
\end{proof}
In this result the quantity $\eta$ measures how much probability mass the tilted target $p_\beta$ assigns to $\delta$-optimal trajectories, while $\varepsilon$ measures how closely the learned generator $q_\theta$ matches $p_\beta$ in total variation.
Whenever $\eta>\varepsilon$, the probability of making a $\delta$-suboptimal planning decision decays exponentially in the number of sampled candidates $M_{\rm plan}$. This result shows  that larger planning budgets and better approximation of $p_\beta$ directly improve the reliability of the closed-loop planner, ensuring its $\delta$-optimality.
\section{Numerical Results}\label{sec:numerical_results}
In this section, we present numerical experiments illustrating the advantages of our approach on a dynamical system.
\\[-4pt]
\begin{figure}[t!]
    \centering
    \includegraphics[width=1\linewidth]{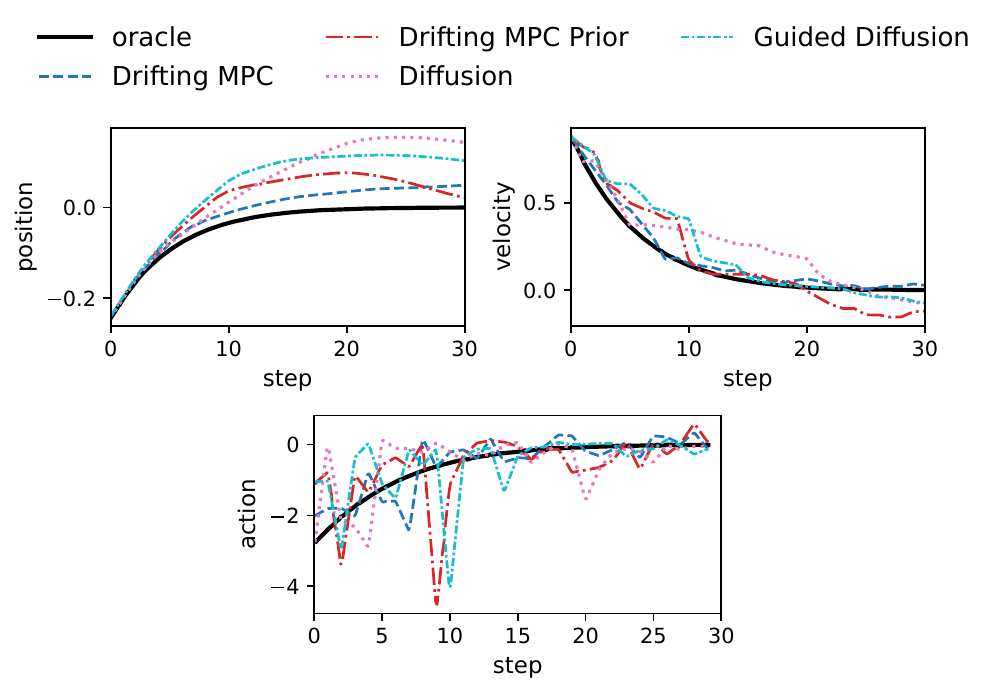}
    \caption{Rollouts obtained for the different models ($H=30$).}
    \label{fig:rollouts}
\vspace{-0.5cm}
\end{figure}

\paragraph{Environment, dataset, and oracle}

\begin{figure*}[t!]
\vspace{-0.5cm}
    \centering
    \includegraphics[width=0.8\textwidth]{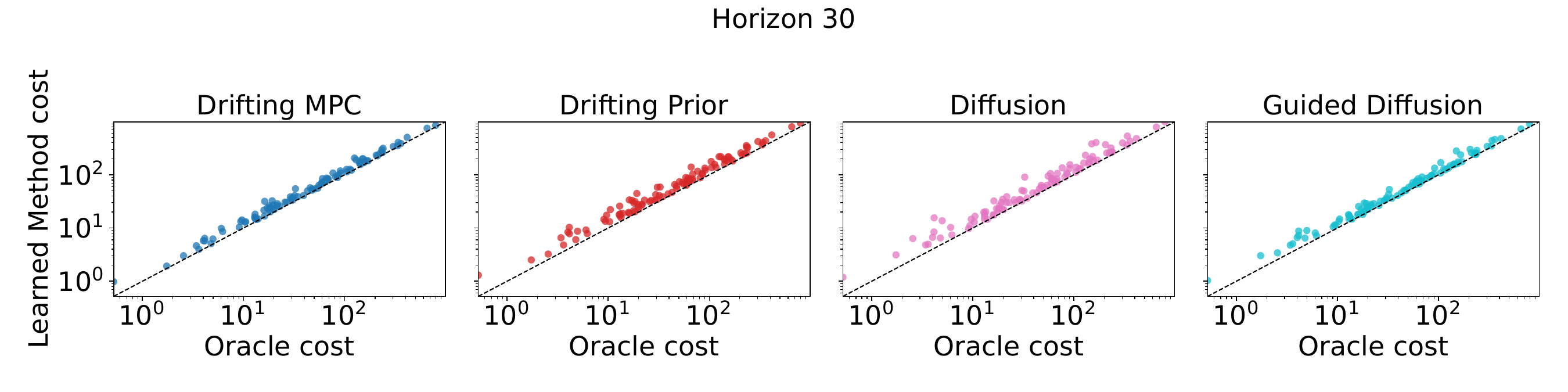}

\vspace{-0.3cm}

    \includegraphics[width=0.8\textwidth]{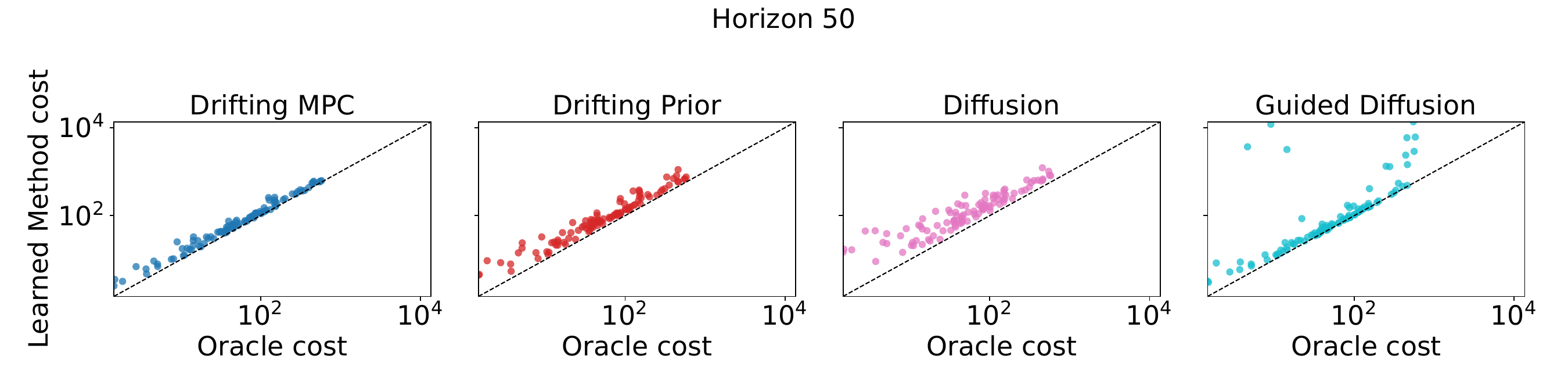}
\vspace{-0.3cm}

    \includegraphics[width=0.8\textwidth]{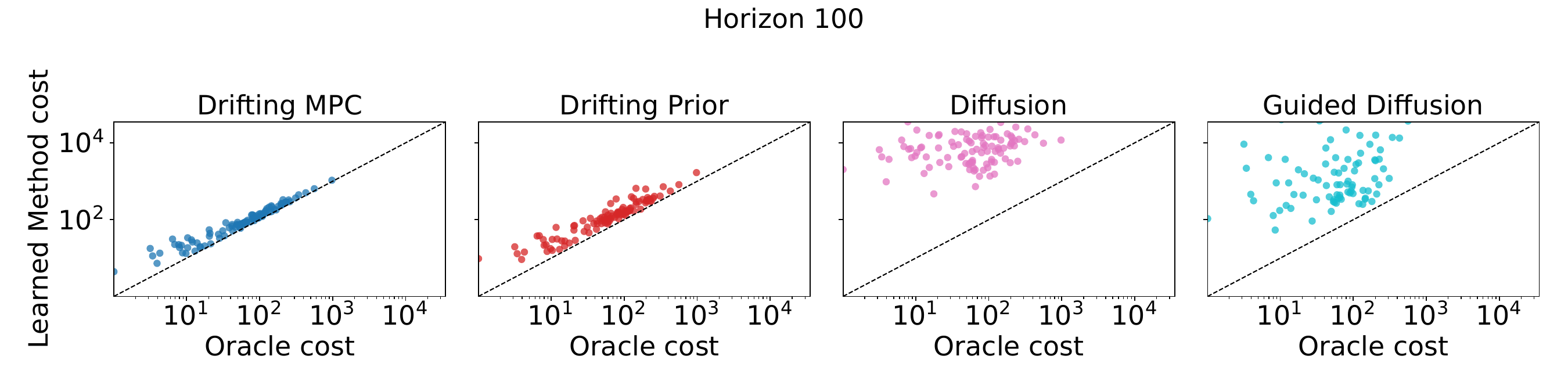}
    \caption{Scatter plots comparing the cost of 100 rollouts against the Oracle for horizons $H\in \{30, 50, 100\}$.}
    \label{fig:scatter_plots}
    \vspace{-0.5cm}
\end{figure*}

The benchmark environment is the one-dimensional mass-spring-damper system
\[
\dot p = v, \qquad \dot v = -\frac{k_s}{m}p - \frac{c}{m}v + \frac{1}{m}u,
\]
which is discretized exactly under zero-order hold before being used for dataset collection and oracle evaluation. The default physical parameters in code are $m=1.0$, $k_s=1.0$, $c=0.2$, and $\Delta t=0.05$. The planning horizon is set equal to the episode length, and we consider values of 30, 50, and 100. Initial states are sampled uniformly from the box $[-2,2]\times[-2,2]$. Offline trajectories are collected by a mixture of controllers: a finite-horizon LQR oracle with optional action noise ($10\%$ of the dataset), a noisy PD controller ($10\%$), and a smooth random open-loop controller ($80\%$).
The oracle benchmark is a finite-horizon LQR computed by backward Riccati recursion using the true discretized linear dynamics.
\\[-4pt]


\paragraph{Implemented baselines}

We compare the proposed method, Drifting MPC, with 3 baselines:
\begin{itemize}
    \item \texttt{Drifting Prior}: a drifting generator conditioned only on $x_0$.
    \item \texttt{Diffusion}: a DDPM-style \cite{ho2020denoising} trajectory generator, without cost conditioning at test time.
    \item \texttt{Guided Diffusion}: the same model as in the Diffusion \cite{pmlr-v162-janner22a} baseline, but now equipped with classifier guidance at test time. The guidance signal is the cumulative trajectory cost, as in the Diffuser architecture.\footnote{Since we assume that the cost function is available at test time, the gradient of the cumulative cost can be computed in closed form, eliminating the need to train an additional classifier.}
\end{itemize}
Every method has been trained on the same dataset, using 500 training epochs. Diffusion methods use 64 denoising steps\footnote{The code is available at \url{github.com/danielefoffano/Receding-Horizon-Control-via-Drifting-Models}} .\\[-4pt]
\begin{table}[t]
\centering
\caption{Cost (mean $\pm$ SE and median [IQR]) and rollout time (mean $\pm$ SE) via BCa bootstrap ($10\,000$ resamples).}
\label{tab:cost_time}
\footnotesize
\setlength{\tabcolsep}{3pt}
\begin{tabular}{l ccc}
\toprule
Method & Avg Cost  & Median Cost [IQR] & Avg Time [ms] \\
\midrule
\multicolumn{4}{l}{\textit{Horizon 30}} \\[2pt]
Oracle           & $90.1 \pm 12.9$   & $46.5~[16, 114]$    & $13.3 \pm 0.1$   \\
{\bf Drift MPC}        & $105.8 \pm 14.2$  & $54.3~[21, 127]$    & $29.8 \pm 0.9$   \\
Drift Prior      & $116.9 \pm 15.6$  & $60.1~[22, 158]$    & $28.5 \pm 0.2$   \\
Diffusion        & $119.6 \pm 16.2$  & $58.2~[21, 143]$    & $1623 \pm 3.3$   \\
Guided Diffusion & $106.4 \pm 14.6$  & $51.4~[18, 133]$    & $2068 \pm 3.7$   \\
\midrule
\multicolumn{4}{l}{\textit{Horizon 50}} \\[2pt]
Oracle           & $107.3 \pm 13.6$  & $49.5~[21, 126]$    & $36.4 \pm 0.2$   \\
{\bf Drift MPC}        & $131.5 \pm 15.6$  & $69.0~[30, 154]$    & $54.3 \pm 0.7$   \\
Drift Prior      & $169.2 \pm 21.4$  & $83.2~[39, 193]$    & $52.4 \pm 0.2$   \\
Diffusion        & $201.9 \pm 22.7$  & $122.8~[54, 246]$   & $2707 \pm 4.6$   \\
Guided Diffusion & $623.5 \pm 201.5$ & $67.8~[34, 161]$    & $3467 \pm 5.9$   \\
\midrule
\multicolumn{4}{l}{\textit{Horizon 100}} \\[2pt]
Oracle           & $93.2 \pm 9.5$    & $65.3~[24, 122]$    & $141 \pm 0.4$    \\
{\bf Drift MPC}        & $122.6 \pm 11.3$  & $85.2~[40, 161]$    & $135 \pm 0.8$    \\
Drift Prior      & $168.1 \pm 16.1$  & $116.4~[60, 206]$   & $134 \pm 0.5$    \\
Diffusion        & $8647 \pm 694$    & $7080~[3194, 12035]$ & $5514 \pm 8.5$  \\
Guided Diffusion & $27317 \pm 6192$  & $1565~[441, 13580]$ & $7055 \pm 10.8$  \\
\bottomrule
\end{tabular}
\end{table}
\paragraph{Results}

Our numerical results show that Drifting MPC consistently achieves the best overall performance among the learned methods. In \cref{fig:rollouts}, its rollout closely matches the oracle, unlike Drifting Prior and the diffusion baselines. \Cref{tab:cost_time} confirms that Drifting MPC attains substantially lower cost while remaining much faster than diffusion-based planners, and the scatter plots in \cref{fig:scatter_plots} show that its performance is not only better on average but also more consistent, with costs concentrated near the oracle across episodes. While Oracle and Drifting methods maintain similar performance across horizons, both Diffusion baselines degrade significantly as the horizon grows. Their median costs are notably lower, however, indicating that a few catastrophic rollouts skew the mean; we attribute this to insufficient training epochs (and likely too few denoising steps) for convergence. This highlights that Drifting converges to a near-optimal solution faster than the diffusion baselines. Finally, Drifting methods generate rollouts within the same order of magnitude as the oracle, whereas Diffusion methods can be generally much slower due to repeated denoising steps and, for guided diffusion, the per-step classifier-guidance gradient computation.

\section{Conclusions}\label{sec:conclusions}

We introduced Drifting MPC, an offline trajectory optimization framework that combines one-step drifting generative models with receding-horizon planning by tilting the learned trajectory distribution toward low-cost trajectories while remaining supported by offline data. This establishes a principled connection between trajectory generation and regularized optimal control, and yields a planner that can be queried efficiently at test time. Our experiments show that Drifting MPC outperforms both the drifting prior and diffusion-based baselines, achieving near-oracle performance while preserving the computational advantage of one-step generation.Future work should focus on a broader investigation of alternative guidance mechanisms.

\bibliographystyle{IEEEtran}
\bibliography{references}

@article{ho2020denoising,
  title={Denoising diffusion probabilistic models},
  author={Ho, Jonathan and Jain, Ajay and Abbeel, Pieter},
  journal={Advances in neural information processing systems},
  volume={33},
  pages={6840--6851},
  year={2020}
}

@misc{deng2026Generative,
    title = {Generative {Modeling} via {Drifting}},
    doi = {10.48550/arXiv.2602.04770},
    urldate = {2026-03-31},
    publisher = {arXiv},
    author = {Deng, Mingyang and Li, He and Li, Tianhong and Du, Yilun and He, Kaiming},
    month = feb,
    year = {2026},
    note = {arXiv:2602.04770 [cs]},
}

@misc{foffano2025Adversariala,
    title = {Adversarial {Diffusion} for {Robust} {Reinforcement} {Learning}},
    doi = {10.48550/arXiv.2509.23846},
    urldate = {2026-03-31},
    publisher = {arXiv},
    author = {Foffano, Daniele and Russo, Alessio and Proutiere, Alexandre},
    month = dec,
    year = {2025},
    note = {arXiv:2509.23846 [cs]},
}

@inproceedings{pmlr-v162-janner22a,
    series = {Proceedings of machine learning research},
    title = {Planning with diffusion for flexible behavior synthesis},
    volume = {162},
    booktitle = {Proceedings of the 39th international conference on machine learning},
    publisher = {PMLR},
    author = {Janner, Michael and Du, Yilun and Tenenbaum, Joshua and Levine, Sergey},
    editor = {Chaudhuri, Kamalika and Jegelka, Stefanie and Song, Le and Szepesvari, Csaba and Niu, Gang and Sabato, Sivan},
    month = jul,
    year = {2022},
    pages = {9902--9915},
}

@inproceedings{chen2021decision,
    title = {Decision transformer: {Reinforcement} learning via sequence modeling},
    volume = {34},
    booktitle = {Advances in neural information processing systems},
    author = {Chen, Lili and Lu, Kevin and Rajeswaran, Aravind and Lee, Kimin and Grover, Aditya and Laskin, Misha and Abbeel, Pieter and Srinivas, Aravind and Mordatch, Igor},
    year = {2021},
    pages = {15084--15097},
}

@inproceedings{NEURIPS2021_099fe6b0,
    title = {Offline reinforcement learning as one big sequence modeling problem},
    volume = {34},
    booktitle = {Advances in neural information processing systems},
    publisher = {Curran Associates, Inc.},
    author = {Janner, Michael and Li, Qiyang and Levine, Sergey},
    editor = {Ranzato, M. and Beygelzimer, A. and Dauphin, Y. and Liang, P.S. and Vaughan, J. Wortman},
    year = {2021},
    pages = {1273--1286},
}

@misc{ajay2023Conditional,
    title = {Is {Conditional} {Generative} {Modeling} all you need for {Decision}-{Making}?},
    doi = {10.48550/arXiv.2211.15657},
    urldate = {2026-03-31},
    publisher = {arXiv},
    author = {Ajay, Anurag and Du, Yilun and Gupta, Abhi and Tenenbaum, Joshua and Jaakkola, Tommi and Agrawal, Pulkit},
    month = jul,
    year = {2023},
    note = {arXiv:2211.15657 [cs]},
}

@misc{wang2023Diffusion,
    title = {Diffusion {Policies} as an {Expressive} {Policy} {Class} for {Offline} {Reinforcement} {Learning}},
    doi = {10.48550/arXiv.2208.06193},
    urldate = {2026-03-31},
    publisher = {arXiv},
    author = {Wang, Zhendong and Hunt, Jonathan J. and Zhou, Mingyuan},
    month = aug,
    year = {2023},
    note = {arXiv:2208.06193 [cs]},
}

@misc{chi2024Diffusion,
    title = {Diffusion {Policy}: {Visuomotor} {Policy} {Learning} via {Action} {Diffusion}},
    shorttitle = {Diffusion {Policy}},
    doi = {10.48550/arXiv.2303.04137},
    urldate = {2026-03-31},
    publisher = {arXiv},
    author = {Chi, Cheng and Xu, Zhenjia and Feng, Siyuan and Cousineau, Eric and Du, Yilun and Burchfiel, Benjamin and Tedrake, Russ and Song, Shuran},
    month = mar,
    year = {2024},
    note = {arXiv:2303.04137 [cs]},
}

@misc{sacks2022Learning,
    title = {Learning {Sampling} {Distributions} for {Model} {Predictive} {Control}},
    doi = {10.48550/arXiv.2212.02587},
    urldate = {2026-03-31},
    publisher = {arXiv},
    author = {Sacks, Jacob and Boots, Byron},
    month = dec,
    year = {2022},
    note = {arXiv:2212.02587 [cs]},
}

@misc{argenson2021ModelBased,
    title = {Model-{Based} {Offline} {Planning}},
    doi = {10.48550/arXiv.2008.05556},
    urldate = {2026-03-31},
    publisher = {arXiv},
    author = {Argenson, Arthur and Dulac-Arnold, Gabriel},
    month = mar,
    year = {2021},
    note = {arXiv:2008.05556 [cs]},
}

@misc{zhan2022ModelBased,
    title = {Model-{Based} {Offline} {Planning} with {Trajectory} {Pruning}},
    doi = {10.48550/arXiv.2105.07351},
    urldate = {2026-03-31},
    publisher = {arXiv},
    author = {Zhan, Xianyuan and Zhu, Xiangyu and Xu, Haoran},
    month = apr,
    year = {2022},
    note = {arXiv:2105.07351 [cs]},
}

@misc{levine2018Reinforcement,
    title = {Reinforcement {Learning} and {Control} as {Probabilistic} {Inference}: {Tutorial} and {Review}},
    shorttitle = {Reinforcement {Learning} and {Control} as {Probabilistic} {Inference}},
    doi = {10.48550/arXiv.1805.00909},
    urldate = {2026-03-31},
    publisher = {arXiv},
    author = {Levine, Sergey},
    month = may,
    year = {2018},
    note = {arXiv:1805.00909 [cs]},
}

@article{willems2005note,
    title = {A note on persistency of excitation},
    volume = {54},
    number = {4},
    journal = {Systems \& Control Letters},
    publisher = {Elsevier},
    author = {Willems, Jan C and Rapisarda, Paolo and Markovsky, Ivan and De Moor, Bart LM},
    year = {2005},
    pages = {325--329},
}

@inproceedings{tang2022data,
    title = {Data-driven control: {Overview} and perspectives},
    booktitle = {2022 {American} control conference ({ACC})},
    publisher = {IEEE},
    author = {Tang, Wentao and Daoutidis, Prodromos},
    year = {2022},
    pages = {1048--1064},
}

@incollection{ljung1998system,
    title = {System identification},
    booktitle = {Signal analysis and prediction},
    publisher = {Springer},
    author = {Ljung, Lennart},
    year = {1998},
    pages = {163--173},
}

@inproceedings{russo2023Tubebased,
    title = {Tube-based zonotopic data-driven predictive control},
    doi = {10.23919/ACC55779.2023.10156056},
    booktitle = {2023 american control conference ({ACC})},
    author = {Russo, Alessio and Proutiere, Alexandre},
    year = {2023},
    pages = {3845--3851},
}

@book{sutton2018reinforcement,
    title = {Reinforcement learning: {An} introduction},
    publisher = {MIT press},
    author = {Sutton, Richard S and Barto, Andrew G},
    year = {2018},
}

@article{polydoros2017survey,
    title = {Survey of model-based reinforcement learning: {Applications} on robotics},
    volume = {86},
    number = {2},
    journal = {Journal of Intelligent \& Robotic Systems},
    publisher = {Springer},
    author = {Polydoros, Athanasios S and Nalpantidis, Lazaros},
    year = {2017},
    pages = {153--173},
}

@inproceedings{janner2019trust,
    title = {When to trust your model: {Model}-based policy optimization},
    volume = {32},
    booktitle = {Advances in neural information processing systems},
    author = {Janner, Michael and Fu, Justin and Zhang, Marvin and Levine, Sergey},
    year = {2019},
}

@article{prudencio2023survey,
    title = {A survey on offline reinforcement learning: {Taxonomy}, review, and open problems},
    volume = {35},
    number = {8},
    journal = {IEEE transactions on neural networks and learning systems},
    publisher = {IEEE},
    author = {Prudencio, Rafael Figueiredo and Maximo, Marcos ROA and Colombini, Esther Luna},
    year = {2023},
    pages = {10237--10257},
}

@inproceedings{yu2020mopo,
    title = {Mopo: {Model}-based offline policy optimization},
    volume = {33},
    booktitle = {Advances in neural information processing systems},
    author = {Yu, Tianhe and Thomas, Garrett and Yu, Lantao and Ermon, Stefano and Zou, James Y and Levine, Sergey and Finn, Chelsea and Ma, Tengyu},
    year = {2020},
    pages = {14129--14142},
}

@inproceedings{kidambi2020morel,
    title = {Morel: {Model}-based offline reinforcement learning},
    volume = {33},
    booktitle = {Advances in neural information processing systems},
    author = {Kidambi, Rahul and Rajeswaran, Aravind and Netrapalli, Praneeth and Joachims, Thorsten},
    year = {2020},
    pages = {21810--21823},
}

@article{emmons2021rvs,
    title = {Rvs: {What} is essential for offline rl via supervised learning?},
    journal = {arXiv preprint arXiv:2112.10751},
    author = {Emmons, Scott and Eysenbach, Benjamin and Kostrikov, Ilya and Levine, Sergey},
    year = {2021},
}

@inproceedings{10155796,
    title = {Self-tuning tube-based model predictive control},
    doi = {10.23919/ACC55779.2023.10155796},
    booktitle = {2023 american control conference ({ACC})},
    author = {Tranos, Damianos and Russo, Alessio and Proutiere, Alexandre},
    year = {2023},
    pages = {3626--3632},
}

@article{celestini2024transformer,
    title = {Transformer-based model predictive control: {Trajectory} optimization via sequence modeling},
    volume = {9},
    number = {11},
    journal = {IEEE Robotics and Automation Letters},
    publisher = {IEEE},
    author = {Celestini, Davide and Gammelli, Daniele and Guffanti, Tommaso and D'Amico, Simone and Capello, Elisa and Pavone, Marco},
    year = {2024},
    pages = {9820--9827},
}

@inproceedings{coulson2019data,
    title = {Data-enabled predictive control: {In} the shallows of the {DeePC}},
    booktitle = {2019 18th {European} control conference ({ECC})},
    publisher = {IEEE},
    author = {Coulson, Jeremy and Lygeros, John and Dörfler, Florian},
    year = {2019},
    pages = {307--312},
}

\end{document}